\documentclass{article}



\usepackage[numbers,sort&compress]{natbib}

\usepackage[nonatbib,preprint]{neurips_2021}

\usepackage[utf8]{inputenc} 
\usepackage[T1]{fontenc}    
\usepackage{hyperref}       
\usepackage{url}            
\usepackage{booktabs}       
\usepackage{amsfonts}       
\usepackage{nicefrac}       
\usepackage{microtype}      
\usepackage{xcolor}         
\usepackage{tikz}
\usepackage{float}
\usepackage{graphicx}
\usepackage{subcaption}
\usepackage{amsmath}
\usepackage{amsthm}
\usepackage{pifont}
\newcommand{\reffig}[1]{Figure \ref{#1}}
\newcommand{\refsec}[1]{Section \ref{#1}}
\newcommand{\refequ}[1]{Equation \ref{#1}}
\newtheorem{theorem}{Theorem}
\newtheorem{lemma}{Lemma}
\title{Chi-square Loss for Softmax: an Echo of Neural Network Structure}

%

\author{%
  Zeyu Wang
  \\
  School of Mechanical Engineering \& Automation\\
  Beihang University\\
  Beijing, China \\
  \texttt{wang.zeyu@buaa.edu.cn} \\
  \AND
  Meiqing Wang  \\
  School of Mechanical Engineering \& Automation\\
  Beihang University\\
  Beijing, China \\
  \texttt{wangmq@buaa.edu.cn} \\
}

\begin{document}
\bibliographystyle{unsrtnat}
\maketitle

\begin{abstract}
  Softmax working with cross-entropy is widely used in classification, which evaluates the similarity between two discrete distribution columns (predictions and true labels). Inspired by chi-square test, we designed a new loss function called chi-square loss, which is also works for Softmax. Chi-square loss has a statistical background. We proved that it is unbiased in optimization, and clarified its using conditions (its formula determines that it must work with label smoothing). In addition, we studied the sample distribution of this loss function by visualization and found that the distribution is related to the neural network structure, which is distinct compared to cross-entropy. In the past, the influence of structure was often ignored when visualizing. Chi-square loss can notice changes in neural network structure because it is very strict, and we explained the reason for this strictness. We also studied the influence of label smoothing and discussed the relationship between label smoothing and training accuracy and stability. Since the chi-square loss is very strict, the performance will degrade when dealing samples of very many classes.
\end{abstract}

\section{Introduction}
During neural network training, loss function plays a role in specifying the direction of optimization. Some classic loss functions generally have mathematical, informatics or statistical meanings, such as mean-squared-error, mean-absolute-error, cross-entropy loss, etc. In addition, there are many artificially designed loss functions, which represent the author's entry point to consider problems. For example, ranking losses often appear in metric learning, such as contrastive loss (\citet{hadsell2006dimensionality}) and triplet loss (\citet{schroff2015facenet}). In addition, circle loss by \citet{sun2020circle} is also an excellent job, which unifies cross-entropy and metric learning to a certain extent.

Our work is more based on classical statistical thinking. We are inspired by chi-square test and propose chi-square loss. It is not designed for a specific goal, so our focus is not whether chi-square loss can handle a specific problem, but hope to enrich the theoretical system of machine learning from the perspective of classical statistics. Therefore,

\begin{itemize}
\item We explained how we obtained chi-square loss from chi-square test, and proved the unbiasedness of chi-square loss in optimization. Since the chi-square loss formula has a one-hot label in the denominator, it must work with label smoothing (\refsec{chi-square loss} \& \refsec{proof and smoothing}).
\end{itemize}

We use several experiments to explain the mechanism of chi-square loss. Compared with a specially designed loss function, a plain loss function from theory is usually difficult to win in performance, but the interpretability of the loss function can help us understand the principle of neural networks and inspire subsequent research. Therefore, we want to explain the principle, characteristics and performance changes of chi-square loss from different perspectives:

\begin{itemize}
  \item In Experiment 1, we showed the characteristics of chi-square loss: strict, much stricter than cross-entropy, and this feature is especially obvious when visualizing the middle layer of neural networks. By analyzing the formula of chi-square loss, we explained the reason for this strictness, which can well explain the phenomenon in the experiment (\refsec{mnist}).
  \item In Experiment 2, we analyzed the influence of label smoothing. Label smoothing has an impact on loss, accuracy and stability, but label smoothing with better networks can get better results (\refsec{smoothing}).
  \item Chi-square loss is very strict and sensitive to the number of sample classes. In Experiment 3, we found that when the sample classes increase, the chi-square loss will have obvious performance degradation. We believe that there are two reasons for this degradation: 1. More classes make chi-square loss more likely to overfit. 2. Gradient area of the chi-square loss is relatively small, and it is not easy to find the optimizing direction. Therefore, whether the chi-square loss can play a role in a broader field requires further research (\refsec{limitations}).
\end{itemize}

\section{Chi-square test \& chi-square loss}
\label{chi-square loss}
Suppose a discrete population $X$ with $n$ classes : $P\{X=x_i\}=p_i, i=1, 2, ..., n$.

Take $k$ samples from it, and the frequency (not relative frequency) of each class is $f_i$, then define the statistic:

\begin{equation}
  C=\sum\limits_{i=1}^n\frac{(f_i-kp_i)^2}{kp_i}=\sum\limits_{i=1}^n{\frac{f_i^2}{kp_i}}-k.
\end{equation}

When $k$ is large, $C$ approximately obeys the chi-square distribution with $n-1$ degrees of freedom. Therefore, when you need to use experiments to test whether the distribution of samples is consistent with the expected distribution law, you only need to calculate the statistic $C$ based on the experimental results. The smaller the $C$, the more credible.

This work was first proposed by \citet{pearson1900x}. We transform this formula into $\sum\limits_{i=1}^n{\frac{f_i^2}{kp_i}}-k=k(\sum\limits_{i=1}^n{\frac{f_i^2 }{k^2p_i}}-1)$, which makes the variables having more clear meanings: Chi-square test is essentially to test whether the distribution of samples is consistent with the theoretically expected distribution law, and the loss function of Softmax also needs to measure the similarity of two discrete distributions, so we take the ideal distribution law $P$ as the true labels $Y=(y_1, y_2, \dots, y_n)^T$, the relative frequency ${\frac{f_i}{k}}$ as the prediction $\hat Y=(\hat y_1, \hat y_2, \dots, \hat y_n)^T$, then the statistic $C$ can be rewritten as $k(\sum\limits_{i=1}^n\frac{\hat y_i^2}{y_i}-1)$. Our goal is to minimize $\sum\limits_{i=1}^n\frac{\hat y_i^2}{y_i}-1$ (coefficient $k$ can be adjusted in learning rate). So far we have constructed a loss function:

\begin{equation}
  L=\sum\limits_{i=1}^n\frac{\hat y_i^2}{y_i}-1.
\end{equation}

This loss function has a strong statistical meaning, and we call it chi-square loss.

\section{Proof of unbiasedness \& label smoothing}
\label{proof and smoothing}
\subsection{Unbiasedness}
Although according to chi-square distribution, the smaller $\displaystyle\sum_{i=1}^n\frac{\hat y_i^2}{y_i}-1$ is, the higher the credible probability is, we still hope that even if we don't consider the condition (\textbf{approximately obey the chi-square distribution}), it is always keeping that when $\hat Y=Y$, the loss function gets the minimum value. In other words, we expect that chi-square loss is strictly unbiased in optimization process.
\begin{theorem}
  \begin{equation}
    \begin{aligned}
      &\mathop{\arg\min}_{\hat{Y}}\sum_{i=1}^n\frac{\hat y_i^2}{y_i}-1=Y,\\
      &\text{s.t. }\sum_{i=1}^n\hat y_i=\sum_{i=1}^ny_i=1, \hat y_i>0, y_i>0.
    \end{aligned}
    \end{equation}
\end{theorem}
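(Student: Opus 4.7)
The plan is to approach this as a standard constrained optimization problem, and I see two natural routes: Lagrange multipliers and a direct Cauchy--Schwarz inequality. I would lead with Cauchy--Schwarz because it simultaneously exhibits the global lower bound and identifies the equality case, making the unbiasedness claim immediate.

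First I would apply the Cauchy--Schwarz inequality to the vectors $(\hat y_i/\sqrt{y_i})_i$ and $(\sqrt{y_i})_i$ to get
\begin{equation}
\Bigl(\sum_{i=1}^n \hat y_i\Bigr)^{\!2} = \Bigl(\sum_{i=1}^n \frac{\hat y_i}{\sqrt{y_i}}\sqrt{y_i}\Bigr)^{\!2} \le \Bigl(\sum_{i=1}^n \frac{\hat y_i^2}{y_i}\Bigr)\Bigl(\sum_{i=1}^n y_i\Bigr).
\end{equation}
Substituting the simplex constraints $\sum \hat y_i = \sum y_i = 1$ immediately gives $\sum_i \hat y_i^2 / y_i \ge 1$, i.e.\ the loss is $\ge 0$. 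Then I would invoke the equality condition of Cauchy--Schwarz: equality holds iff $\hat y_i / \sqrt{y_i}$ is proportional to $\sqrt{y_i}$, i.e.\ $\hat y_i = c\, y_i$ for all $i$. Summing and using $\sum \hat y_i = \sum y_i = 1$ forces $c = 1$, hence $\hat Y = Y$ is the unique minimizer.

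As a sanity check (and to cover the Lagrangian route in case a reviewer prefers calculus), I would note that the Lagrangian $\sum \hat y_i^2/y_i - 1 - \lambda(\sum \hat y_i - 1)$ has stationarity $2\hat y_i / y_i = \lambda$, so $\hat y_i = \lambda y_i / 2$; the simplex constraint pins $\lambda = 2$ and recovers $\hat y_i = y_i$. Convexity of each $\hat y_i \mapsto \hat y_i^2 / y_i$ (since $y_i > 0$) on the convex feasible set guarantees this stationary point is the global minimum, confirming uniqueness.

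I do not foresee a genuine obstacle; the main thing to be careful about is the strict positivity assumption $y_i > 0$, which is exactly what motivates the label-smoothing requirement flagged in the theorem's introduction. If any $y_i$ were zero, the ratio $\hat y_i^2/y_i$ would be ill-defined (or force $\hat y_i = 0$ to avoid blow-up), and both the Cauchy--Schwarz step and the Lagrangian step would need to be restricted to the support of $Y$. Emphasizing this caveat ties the proof cleanly to the following subsection on label smoothing.
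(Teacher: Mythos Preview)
Your proposal is correct, and your secondary Lagrangian sketch is essentially the paper's own proof: the paper forms $L(\hat y_i,\lambda)=\sum_i \hat y_i^2/y_i - 1 + \lambda(\sum_i \hat y_i - \sum_i y_i)$, sets $\partial L/\partial \hat y_i = 2\hat y_i/y_i + \lambda = 0$, sums to get $\lambda=-2$, and concludes $\hat y_i = y_i$, then simply asserts this is the global minimum.

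Your primary route via Cauchy--Schwarz is genuinely different and, in fact, tighter than what the paper offers. The paper's Lagrangian argument only locates a stationary point and hand-waves the ``easy to figure out that $\hat Y=Y$ is the global minimum point'' step; you close that gap twice over, first through the Cauchy--Schwarz equality condition (which delivers the global lower bound $L\ge 0$ and uniqueness in one stroke) and second by explicitly invoking convexity of $\hat y_i \mapsto \hat y_i^2/y_i$ on the simplex. The Cauchy--Schwarz proof also makes the value of the minimum ($L=0$) transparent, whereas the Lagrangian route requires a separate evaluation. Your remark tying the $y_i>0$ hypothesis to the label-smoothing necessity is exactly the bridge the paper builds in the following subsection, so keeping that caveat is appropriate.
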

\begin{proof}
  Using lagrange multipliers (in fact, "$=1$" is not needed):
\begin{equation*}
  L(\hat y_i,\lambda)=\sum_{i=1}^n\frac{\hat{y}_i^2}{y_i}-1+\lambda(\sum_{i=1}^n\hat y_i-\sum_{i=1}^ny_i).
\end{equation*}
Let the partial derivative of $L$ be 0:
\begin{equation*}
  \frac{\partial L}{\partial \hat y_i}=\frac{2\hat y_i}{y_i}+\lambda=0,
\end{equation*}
\begin{equation}
  2\hat y_i=-\lambda y_i,
  \label{equation 1}
\end{equation}
\begin{equation*}
  2\sum_{i=1}^n\hat y_i=-\lambda\sum_{i=1}^ny_i.
\end{equation*}
Therefore,
\begin{equation}
  \lambda=-2.
  \label{equation 2}
\end{equation}
Substitute \refequ{equation 2} into \refequ{equation 1},
\begin{equation*}
  \hat y_i=y_i, \hat Y=Y,
\end{equation*}
and easy to figure out that $\hat Y=Y$ is the global minimum point.
\end{proof}

\subsection{Label smoothing}
Because $Y$ is located in the denominator, its value cannot be $0$, but it is obvious that $Y$ is a one-hot vector and there must be "$0$" labels, therefore label smoothing is used to solve this problem. Label smoothing was first proposed by \citet{szegedy2016rethinking}. to avoid overconfidence of models and improve generalization ability. In subsequent work, label smoothing has been applied in many studies (such as \citet{real2019regularized}, \citet{chorowski2016towards}, \citet{vaswani2017attention}). \citet{muller2019does} summarized label smoothing researchs and explained principle and effect of label smoothing. For cross-entropy, label smoothing is more like a "trick", but for chi-square loss, label smoothing is necessary, otherwise the denominator will be $0$.

Usually, the function of label smoothing is to make the label softer and the calculation method is $y^{LS}_i=y_i(1-\alpha)+\alpha/n$. For example, for hard label $(1,0,0, 0,0)^T$, the soft label with $\alpha=0.1$ will be $(0.92,0.02,0.02,0.02,0.02)^T$. But for chi-square loss, label smoothing controls the intensity of punishment. The less the smoothing, the greater the punishment.

For more analysis, see \refsec{smoothing}.

\section{Experiment 1: an echo of the structure}
\label{mnist}
In experiment 1, we test chi-square loss on MNIST (\citet{Mnist}), and focus on output distribution. We visualize the penultimate layer of the model in low dimension, and illustrate the mechanism of chi-square loss.

A general dimension reduction method is to extract data from a layer of the neural network, and then map this layer to a two-dimensional or three-dimensional space in a certain way, such as t-SNE (\citet{van2008visualizing}), PCA, etc. Some researchers may concentrate on other methods, such as the aforementioned work by \citet{muller2019does}, who proposed a visualization method to study on label smoothing. All the above-mentioned methods focus on how to map, rather than the structure of neural network. If you think that the neural network itself has an ability to reduct dimensions, then you can actually set a certain layer as a low-dimensionality layer, and directly output this layer. This kind of method proposed by \citet{liu2016large} is like this: set the dimension of the penultimate layer as 2, and then plot them by classes. Compared with general methods, this method changes structure of the neural network. In Liu's work, this method is only for visualization, but in our work, it is of great significance for chi-square loss: Chi-square loss is very strict and sensitive to structure of the neural network and we use this method to illustrate the mechanism of chi-square loss.

We used the model contributed by \citet{Mnistmodel}, who provided a Pytorch version code for Liu's work. We only changed the loss function to Chi-square loss, and changed the last FC layer to a simple 2*10 layer. The learning rate used for chi-square loss is 0.0001, and for cross entropy is 0.01. The label smoothing is 0.1, and the remaining variables are the default values in the code.

As shown in \reffig{distribution}, after changing the dimension of the penultimate layer to 2, the training using chi-square loss can only get about 50\% accuracy. There are 4 categories that can be distinguished, and the remaining 6 categories are mixed together. One of the 6 categories will be considered correct, and the remaining 5 will be incorrect. On the other hand, the categories trained by cross-entropy loss can be distinguished on a 2-dimensional plane.

\begin{figure}
  \centering
  \begin{subfigure}[]{0.49\textwidth}
    \centering
    \includegraphics[]{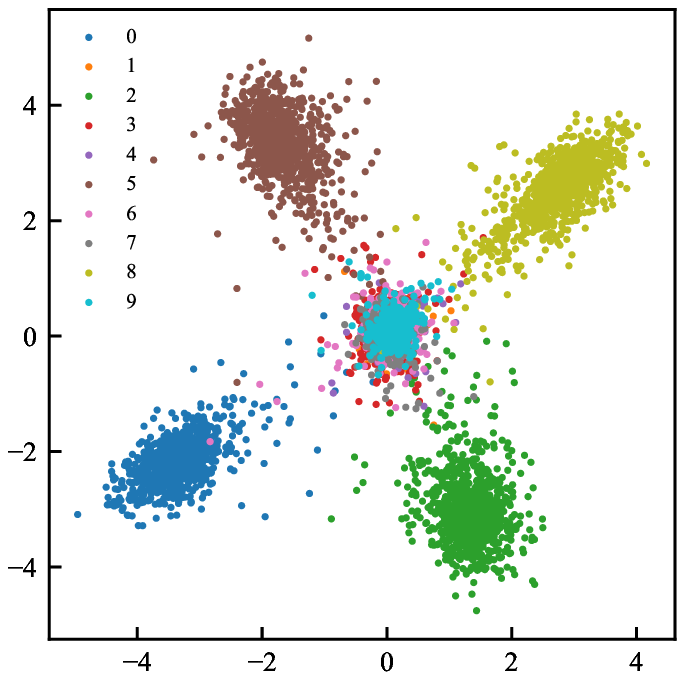}
    \caption{Chi-square loss}
  \end{subfigure}
  \begin{subfigure}[]{0.49\textwidth}
    \centering
    \includegraphics[]{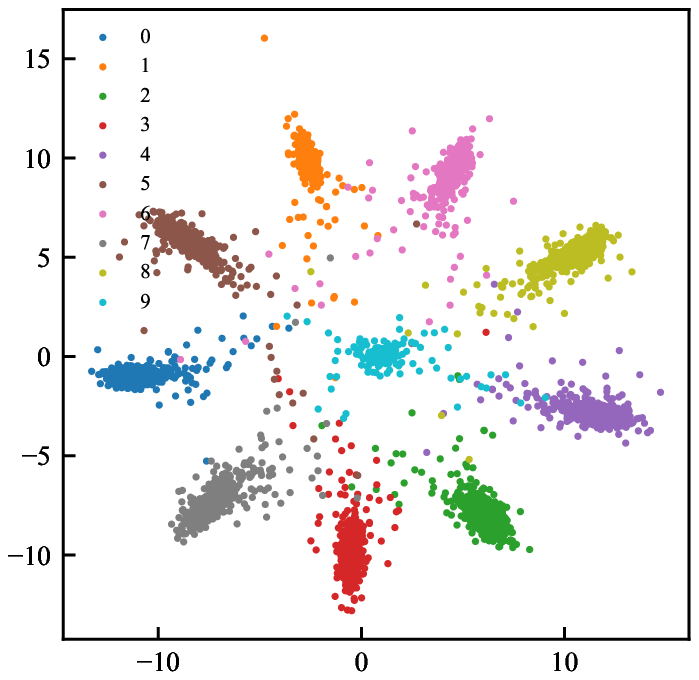}
    \caption{Cross-entropy loss}
  \end{subfigure}
  \caption{Distribution of the penultimate layer (MNIST)}
  \label{distribution}
\end{figure}

This is a phenomenon of insufficient dimensionality. As shown in \reffig{punishment}, the error source of chi-square loss is different from cross-entropy. Chi-square loss punishment is located in the denominator $y_i$. Therefore, if the label that is $0$ is recognized as $1$, it will cause loss. In a n-dimensional sample, there are $n-1$ dimensional labels will cause such a loss. Cross-entropy loss penalizes $\hat y_i$ in the logarithmic function. Recognizing a label that is 1 as 0 will cause loss. In an n-dimensional sample, only one label will cause such a loss. Therefore, chi-square loss is much stricter than cross-entropy loss.

\begin{figure}
  \centering
  \fontsize{14}{15}
  \begin{subfigure}[]{0.49\textwidth}
    \centering
    \begin{tikzpicture}[baseline,every node/.style={align=center}]
    \node at(0,0){$\displaystyle\sum\limits_{i=1}^n{\frac{\hat y^2_i}{y_i}}-1$};
    \draw[dashed](-0.048,-0.39)circle(0.28)node(end){}; 
    \node(start) at(0,-1.3){\small Error source};
    \draw[-{stealth}](start)--(-0.023,-0.73);
\end{tikzpicture}
    \caption{Chi-square loss}
  \end{subfigure}
  \begin{subfigure}[]{0.49\textwidth}
    \centering
    \begin{tikzpicture}[baseline,every node/.style={align=center}]
    \node at(0,0){$\displaystyle-\sum\limits_{i=1}^n{y_i\ln \hat y_i}$};
    \draw[dashed](1.11,0)circle(0.28)node(end){};
    \node(start) at(0.6,-1.3){\small Error source};
    \draw[-{stealth}](start)--(1.01,-0.3);  
\end{tikzpicture}
    \caption{Crossentropy loss}
  \end{subfigure}
  \caption{Different ways of punishment}
  \label{punishment}
\end{figure}

We can study this phenomenon from another perspective. Assume that for the last layer, the output $Y$ and the class label $l$ form an eleven-dimensional random variable $\begin{pmatrix} Y \\ l \end{pmatrix}$, then for cross-entropy loss, $l$ is only strongly correlated with one dimension in $Y$. For chi-square loss, 9 dimensions of $l$ and $Y$ are strongly correlated (considering the condition of $\displaystyle\sum_{i=1}^n\hat y_i=1$, $l$ and the last dimension also have some relevance). For a certain label (such as label 0), the covariance matrix of $\begin{pmatrix} Y \\ l \end{pmatrix}$ is approximately shown as \reffig{matrix} (the figure represents the last row of the covariance matrix). Red means strong correlation, and green means correlation is not important.

\begin{figure}
  \centering
  \fontsize{14}{15}
  \begin{subfigure}[]{0.49\textwidth}
    \centering
    \begin{tikzpicture}[baseline,every node/.style={align=center}]
      \foreach \x in{0,1,2,3,4,5,6,7,8,9}
      {
      \draw[fill = green!20!white,draw=green] (\x*0.5,0) rectangle (\x*0.5+0.5, 0.5);
      \node(label)at(\x*0.5+0.25, 0.75){$y_{\x}$};
      }
      \draw[fill = green!20!white,draw=green] (5,0) rectangle (5.5, 0.5);
      \node(label)at(5+0.25, 0.75){$l$};
      \node(label)at(-0.2, 0.25){$l$};
      \foreach \x in{1,2,3,4,5,6,7,8,9}
      \draw[fill = red!20!white,draw=red] (\x*0.5,0) rectangle (\x*0.5+0.5, 0.5);
      \end{tikzpicture}
    \caption{Chi-square loss}
  \end{subfigure}
  \begin{subfigure}[]{0.49\textwidth}
    \centering
    \begin{tikzpicture}[baseline,every node/.style={align=center}]
      \foreach \x in{0,1,2,3,4,5,6,7,8,9}
      {
      \draw[fill = green!20!white,draw=green] (\x*0.5,0) rectangle (\x*0.5+0.5, 0.5);
      \node(label)at(\x*0.5+0.25, 0.75){$y_{\x}$};
      }
      \draw[fill = green!20!white,draw=green] (5,0) rectangle (5.5, 0.5);
      \node(label)at(5+0.25, 0.75){$l$};
      \node(label)at(-0.2, 0.25){$l$};
      \draw[fill = red!20!white,draw=red] (0,0) rectangle (0.5, 0.5);
      \end{tikzpicture}
    \caption{Crossentropy loss}
  \end{subfigure}
  \caption{The last row of the covariance matrix}
  \label{matrix}
\end{figure}
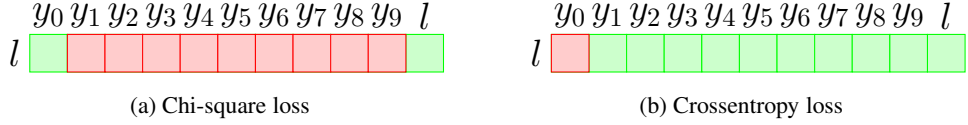

Although it is a nonlinear transformation from the penultimate layer to the output layer, the nonlinearity is not very strong. For convenience, we regard this transformation as a linear transformation $\Phi$, and assume that the output of the penultimate layer is $X$, then,
\begin{equation}
  Y=\Phi X.
\end{equation}
Adding label $l$, it can be writed as
\begin{equation}
  \begin{pmatrix} Y \\ l \end{pmatrix}=\begin{pmatrix} \Phi & 0 \\ 0 & 1 \end{pmatrix}\begin{pmatrix} X \\ l \end{pmatrix}.
\end{equation}
In \reffig{distribution}, samples of a class occupy a position, so the graphical meaning of $l$ is the position of samples.
\begin{lemma}
  \label{lemma}
  If the covariance matrix of the multidimensional random variable $A$ is $\Sigma_A$, and there is a linear transformation $\Theta$ making $B=\Theta A$, then the covariance matrix of $B$ is $\Sigma_B=\Theta \Sigma_A \Theta^T$.
\end{lemma}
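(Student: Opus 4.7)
The plan is to derive the conclusion directly from the definition of the covariance matrix together with the linearity of expectation, since this is a standard identity and no real obstacle is expected.

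First I would record the definition $\Sigma_A = \mathrm{E}\bigl[(A-\mathrm{E}[A])(A-\mathrm{E}[A])^T\bigr]$ and the analogous expression for $\Sigma_B$. Next I would apply linearity of expectation to the linear transformation $B=\Theta A$ to obtain $\mathrm{E}[B]=\Theta\,\mathrm{E}[A]$, so that the centered vector factors cleanly as
\begin{equation*}
B-\mathrm{E}[B] = \Theta A - \Theta\,\mathrm{E}[A] = \Theta\bigl(A-\mathrm{E}[A]\bigr).
\end{equation*}

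The key step is then to substitute this into the definition of $\Sigma_B$ and use the identity $(\Theta v)(\Theta v)^T = \Theta\,vv^T\,\Theta^T$ with $v=A-\mathrm{E}[A]$, giving
\begin{equation*}
\Sigma_B = \mathrm{E}\bigl[\Theta(A-\mathrm{E}[A])(A-\mathrm{E}[A])^T\Theta^T\bigr].
\end{equation*}
Since $\Theta$ is a deterministic (constant) matrix, I would pull it outside the expectation on both sides to conclude $\Sigma_B = \Theta\,\Sigma_A\,\Theta^T$.

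There is essentially no hard step here, as the argument is a textbook computation; the only thing to be careful about is correctly tracking the transpose when moving $\Theta$ through the outer product, which is handled by the identity $(\Theta v)^T = v^T\Theta^T$. I would therefore keep the proof brief and formal, presenting it as the three-line chain of equalities above.
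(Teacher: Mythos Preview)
Your proof is correct and is the standard textbook derivation. The paper itself does not prove this lemma at all; it simply states it as a known fact and immediately applies it to compute $\Sigma'_Y$ from $\Sigma'_X$. So there is nothing to compare against, and your brief three-line argument is entirely appropriate.
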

We assume that the covariance matrix of $\begin{pmatrix} X \\ l \end{pmatrix}$ is $\Sigma'_X=\begin{pmatrix} \Sigma_{dd} & \sigma^T_{dl} \\ \sigma_{dl} & \Sigma_{ll} \end{pmatrix}$, $d$ means dots in \reffig{distribution}, $l$ means label. In fact, the covariance matrix of $X$ is $\Sigma_X=\Sigma_{dd}$. $\sigma$ means that it is not a complete covariance matrix. According to Lemma \ref{lemma}, The covariance matrix of $\begin{pmatrix} Y \\ l \end{pmatrix}$ is
\begin{equation}
  \begin{aligned}
    \Sigma'_Y=&\begin{pmatrix} \Phi & 0 \\ 0 & 1 \end{pmatrix}\Sigma'_X\begin{pmatrix} \Phi & 0 \\ 0 & 1 \end{pmatrix}^T\\
=&\begin{pmatrix} \Phi & 0 \\ 0 & 1 \end{pmatrix}\begin{pmatrix} \Sigma_{dd} & \sigma^T_{dl} \\ \sigma_{dl} & \Sigma_{ll} \end{pmatrix}\begin{pmatrix} \Phi & 0 \\ 0 & 1 \end{pmatrix}^T\\
=&\begin{pmatrix} \Phi\Sigma_{dd}\Phi^T & \Phi\sigma^T_{dl} \\ \sigma_{dl}\Phi^T & \Sigma_{ll} \end{pmatrix}.
  \end{aligned}
\end{equation}

$\sigma_{dl}\Phi^T$ is what shown in \reffig{matrix}. $\sigma_{dl}$ represents the relationship between dots and the label in the penultimate layer (\reffig{distribution}). If the penultimate layer has only two dimensions, then $\sigma_{dl}$ is a $1\times 2$ row vector. For cross-entropy loss, in order to make $\sigma_{dl}\Phi^T$ show as in \reffig{matrix}, only need to do is make a column in $\Phi^T$ have strong correlation with $\sigma_{dl}$. But for chi-square loss, it is necessary to make 9 columns in $\Phi^T$ have a strong correlation with $\sigma_{dl}$, and different classes of labels need to be distinguished from each other, so there will be insufficient dimensionality happening.

According to \reffig{distribution}, it is easy to find that for every additional dimension of the penultimate layer, the accuracy will increase by 20\% (each dimension can provide position of two classes). The expected relationship between dimension and accuracy is as follows:
\begin{center}
  2 for 50\%, 3 for 70\%, 4 for 90\%, and 5 or more for 100\%.
\end{center}
The results of the experiment can be consistent with the predictions. Due to limited space, no more tautology here.

We use the echo of neural network structure to describe chi-square loss, because we want to highlight the characteristic that chi-square loss is sensitive to the structure. This feature cannot be discovered through general dimensionality reduction visualization.

\section{Experiment 2: label smoothing}
In Experiment 2, we tested the CIFAR-10 dataset (\citet{CIFAR}), and we hope to demonstrate the effect of label smoothing through this experiment. We used models contributed by \citet{CIFAR10model}. We only changed loss function and set learning rate as 0.0001, and the remaining variables are the default values in the code.
\label{smoothing}
\subsection{The effect of label smoothing on loss}
For chi-square loss, the main source of loss is from zero-labels. Since $y^{LS}_i=y_i(1-\alpha)+\alpha/n$, when $y_i=0$, $y^{LS}_i=\alpha/n$. What's more, $\hat y ^2_i$ is a predicted value and has randomness, so that we assume $E(\hat y^2_i)=A$, and ignore loss from non-zero-label, in this case:
\begin{equation}
  L^{LS}=\sum_{i=1}^n\frac{\hat y_i^2}{y^{LS}_i}-1\approx\sum_{i=1,y_i=0}^n\frac{\hat y_i^2}{y^{LS}_i}-1\approx\frac{(n-1)A}{\alpha/n}-1=\frac{n(n-1)A} {\alpha}-1.
\end{equation}
Therefore, approximately,
\begin{equation}
  L^{LS}+1\propto\frac{1}{\alpha}
\end{equation}

This estimation is relatively rough. It can be seen from \reffig{smoothing on loss} that the loss and label smoothing do have an approximate inverse relationship, but not very accurate. This estimation is mainly to show that when adjusting label smoothing, the learning rate can be adjusted inversely, so that the step size of model training is relatively stable.

\begin{figure}
  \centering 
  \includegraphics[]{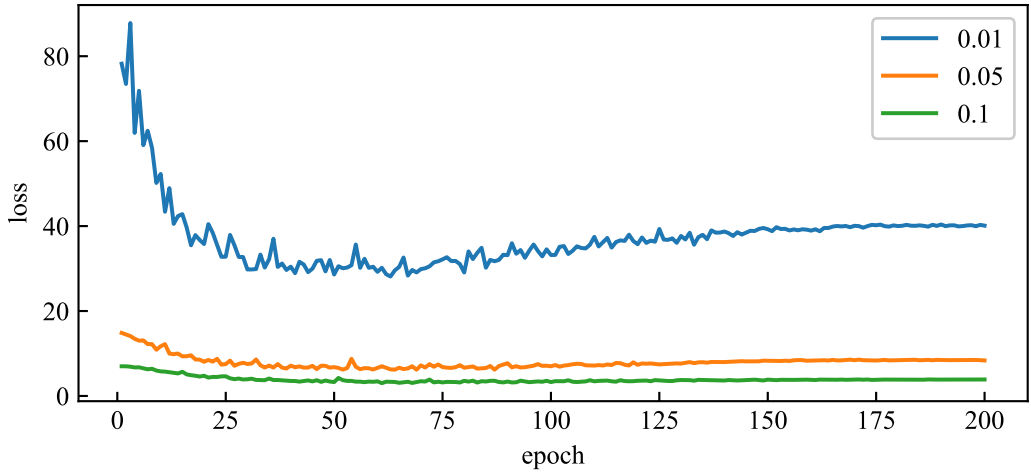}
  \caption{The loss of test set using different label smoothing, training ResNet18.}
   \label{smoothing on loss} 
 \end{figure}

 \subsection{The effect of label smoothing on accuracy and stability}
 Easy to think that the smaller the label smoothing, the stricter the loss function will be, but it will also bring in instability.

\reffig{smoothing on acc} shows the training accuracy of VGG and ResNet18 using different label smoothing. For VGG, when the label smoothing is reduced below 0.1, performance degradation occurs, while ResNet can use smaller label smoothing to obtain higher accuracy. Therefore, a neural network with a better structure can achieve better results with smaller label smoothing.
\begin{figure}
  \centering 
  \includegraphics[]{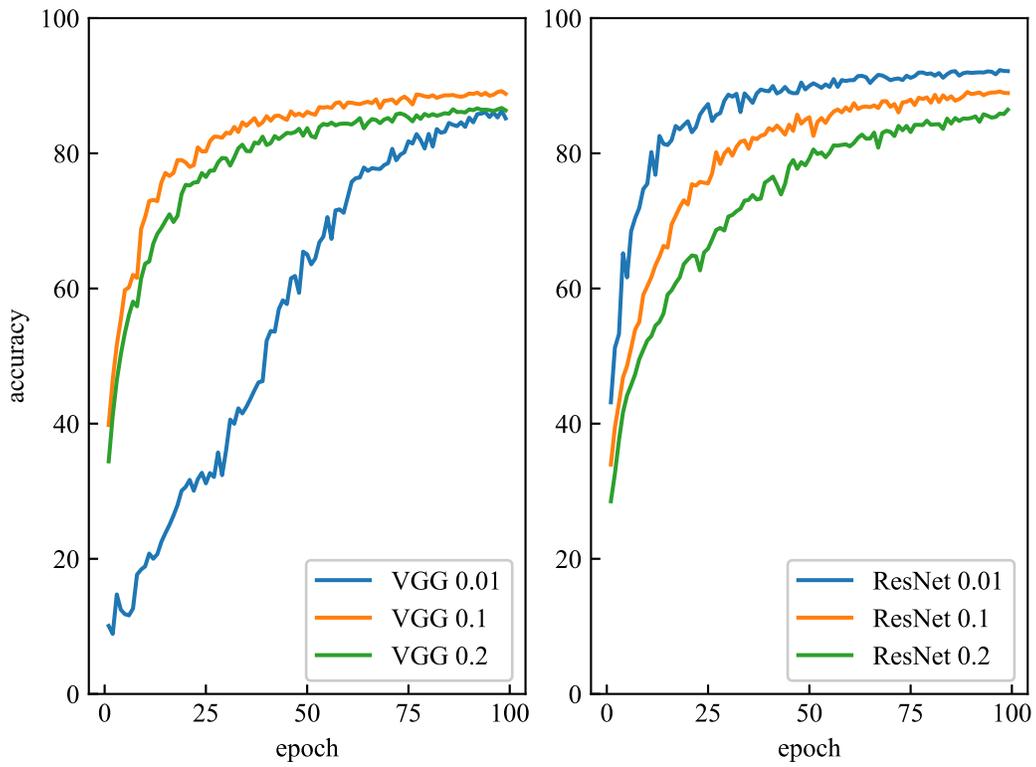}
  \caption{The accuracy of test set using different label smoothing, training VGG and ResNet18.}
   \label{smoothing on acc} 
 \end{figure}
\section{Experiment 3: limitations}
\label{limitations}
In experiment 3, we tested the CIFAR-100 dataset (\citet{CIFAR}). We use the same training parameters and models as in cifar10, only changing the dimensionality of the output layer. For $n$-dimensional samples, chi-square loss will cause $n-1$-dimension-loss, while cross-entropy only cause one-dimension-loss. It can be imagined that chi-square loss is very sensitive to the dimensionality of the sample. We extract some classes from CIFAR-100 for training, and the accuracy is shown in \reffig{decline}. We found that as the sample dimension increases, the performance of chi-square loss will degrade. We think there are two reasons for this performance degradation:
\begin{figure}
  \centering 
  \includegraphics[]{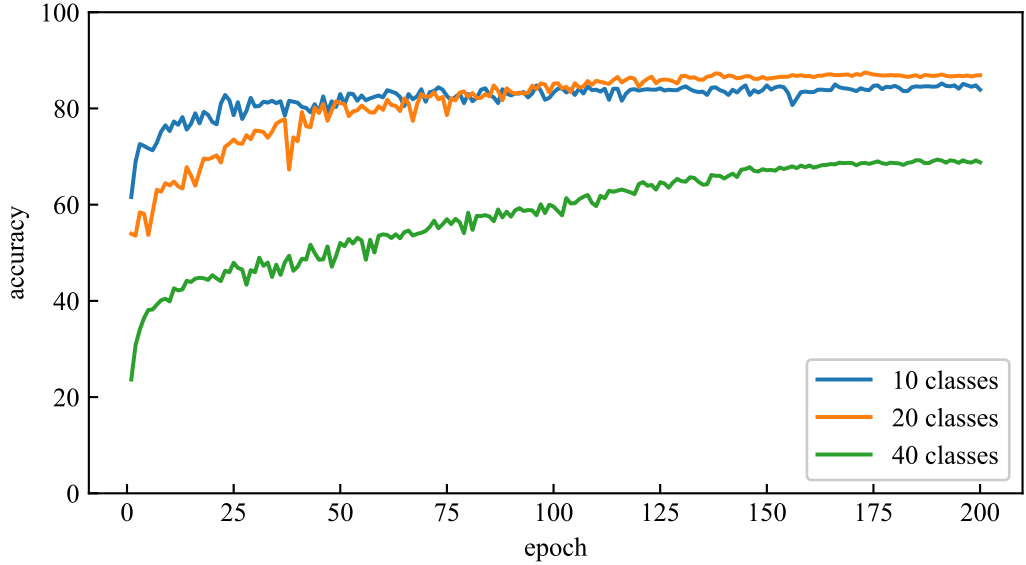}
  \caption{$n$ classes from CIFAR-100: performance degradation}
   \label{decline} 
 \end{figure}
\begin{itemize}
  \item Too many sources of loss make chi-square loss easy to overfit when training multi-class samples.
  \item We draw the error surface (before Softmax) of the output layer and found that chi-square loss only has gradient in a relatively small range, so it may be difficult to train (\reffig{error surface}).
\end{itemize}
Therefore, we think that the chi-square loss is currently not an out-of-the-box loss function.
\begin{figure}
  \centering 
  \includegraphics[]{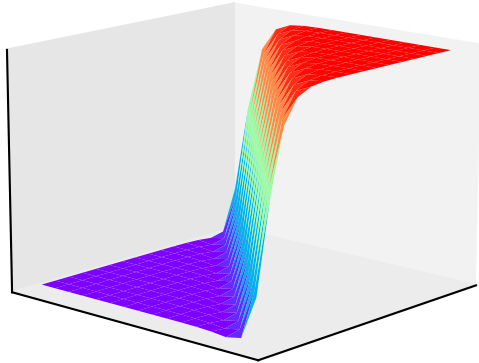}
  \caption{The error surface of chi-square loss}
   \label{error surface} 
 \end{figure}

\section{Conclusion}
We proposed chi-square loss inspired by chi-square test. We proved the unbiasedness of chi-square loss and gave the conditions of use (label smoothing). We used 3 sets of experiments to discuss the mechanism of chi-square loss, the influence of label smoothing, and its limitations. We believe that chi-square loss is not yet a completely usable loss function, but its statistical background is meaningful for enriching the theoretical system of machine learning, and its sensitivity to the neural network structure makes it like an echo of the network, which can deepen our understanding of the loss function mechanism.
\medskip

\bibliography{references}

\end{document}